\DeclareMathOperator*{\argmin}{arg\,min}
\newcommand{\R}{\mathbb{R}}
\newtheorem{assumption}{Assumption}
\title{Collaborative Inference for Efficient Remote Monitoring}
\author{Chi Zhang$^1$, Yong Sheng Soh$^1$, Ling Feng$^1$, Tianyi Zhou$^1$, Qianxiao Li$^{1,2}$
\\}
\date{%
	$^1$ Institute of High Performance Computing, Singapore\\%
	$^2$National University of Singapore, Singapore\\[2ex]%
}
\begin{document}

\maketitle

\begin{abstract}
While current machine learning models have impressive performance over a wide range of applications, their large size and complexity render them unsuitable for tasks such as remote monitoring on edge devices with limited storage and computational power. A naive approach to resolve this on the model level is to use simpler architectures, but this sacrifices prediction accuracy and is unsuitable for monitoring applications requiring accurate detection of the onset of adverse events.
In this paper, we propose an alternative solution to this problem by decomposing the predictive model as the sum of a simple function which serves as a local monitoring tool, and a complex correction term to be evaluated on the server. A sign requirement is imposed on the latter to ensure that the local monitoring function is safe, in the sense that it can effectively serve as an early warning system.
Our analysis quantifies the trade-offs between model complexity and performance, and serves as a guidance for architecture design. We validate our proposed framework on a series of monitoring experiments, where we succeed at learning monitoring models with significantly reduced complexity that minimally violate the safety requirement.  More broadly, our framework is useful for learning classifiers in applications where false negatives are significantly more costly compared to false positives.
\end{abstract}

\section{Introduction}
Deep learning has been shown to be an effective solution in analyzing time-series data and making continuous predictions~\cite{fawaz2018deep}. It redefines state-of-the-art performance in a wide range of areas, including illness prediction, surveillance monitoring and anomaly detection~\cite{zhao2016deep,acharya2018deep,schirrmeister2017deep,yildirim2018arrhythmia,javaid2016deep,xu2015learning}. As a canonical example, results in~\cite{rajpurkar2017cardiologist} show that deep learning based algorithm already exceeds board certified cardiologists in detecting heart arrhythmias from electrocardiograms when given a large annotated dataset.


The success of deep learning often relies on the large model size and therefore heavy computation complexities. For example, the aforementioned arrhythmias detection algorithm~\cite{rajpurkar2017cardiologist} relies on a 34-layer convolutional neural network to map the sequences of ECG samples to rhythm classes on a fixed dataset. However, for most real scenarios, monitoring is performed on edge devices that are often designed with limited storage and computation ability, such as wearable devices for health monitoring and cameras for anomaly detection. Besides, the input signals for real-time monitoring arrive in a sequential order rather than lie in a fixed dataset, making it more demanding or even impossible for edge devices to provide real-time predictions if a huge network is deployed.


One possible solution is collecting data on edge devices and providing real-time response with a truncated or compressed model. But this would often lead to an undesirable sacrifices of prediction accuracies as most simple neural networks cannot fully reveal the underlying mechanisms and fail to provide predictions as accurate as those complex models.
Another possible approach is to consider the local monitoring devices only as data collectors, while model training and data analysis are performed on remote servers. But this may potentially incur large overheads if data dimension and/or the monitoring time is large. Moreover, from the users' perspective, constantly sending local data like their bio-medical information to remote server may raise their concerns on privacy and security. In the meantime, servers may need to  make continuous predictions and provide timely response for each user, which can often be impractical since the number of users can be huge.

Therefore, it is necessary to design a new learning paradigm with \emph{collaborative inference} so that most of the computations and predictions are performed through the local device,  without the necessities of sending sensitive data to servers. In case of emergency or situations that are beyond the capability of local simple models, data are transported to servers where further analysis is performed. Moreover, for scenarios like medical monitoring, one needs more than just finding an adequate approximation to a target function.  A \emph{safety requirement} should be guaranteed so that local device can detect unusual situations in advance, and false negative cases should always be eliminated. 


\begin{figure*}[h]
	\centering
	\includegraphics[width=0.9\linewidth]{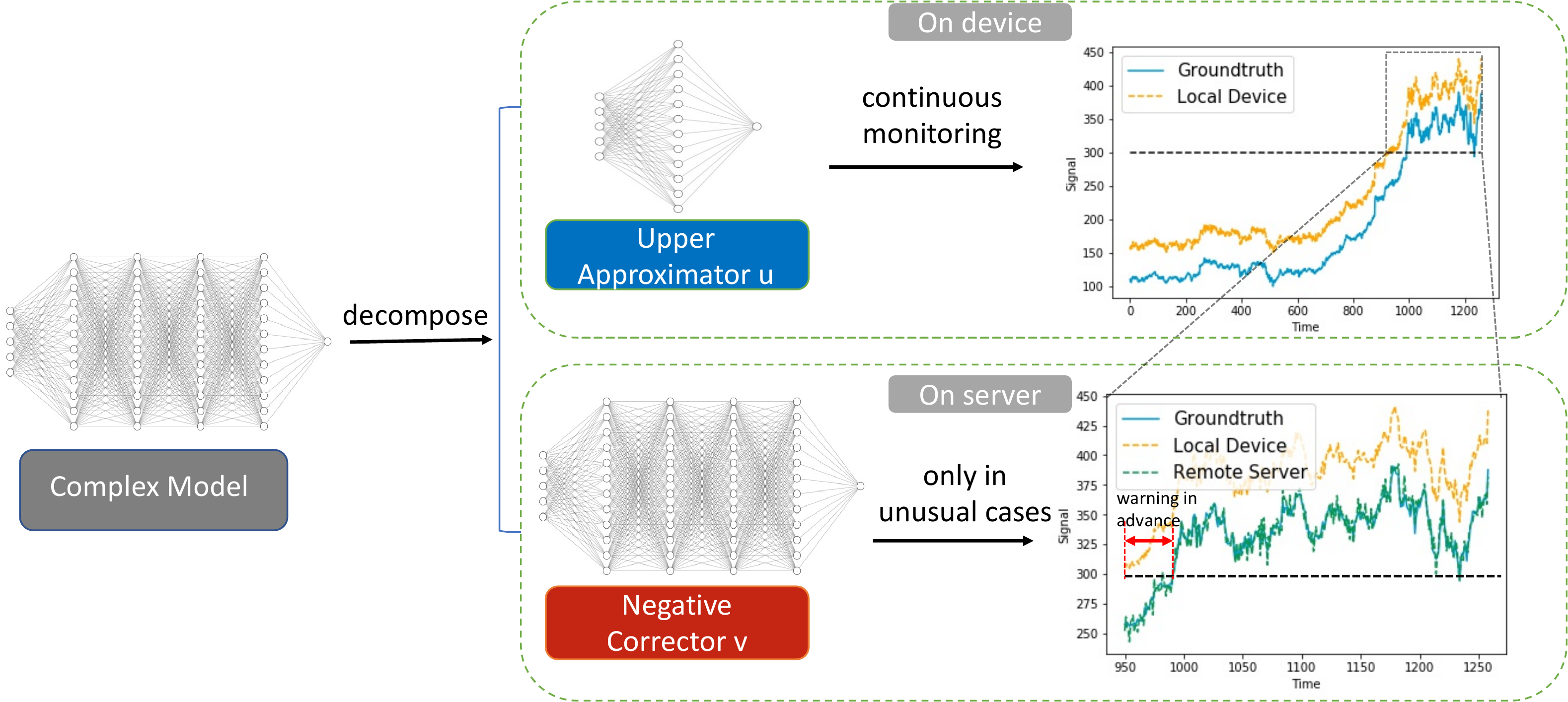}
	\caption{The strong but complex model is decomposed into one on-device upper approximator and one on-server negative corrector. The on-device low-dimensional classifier $u$ provides continuously approximation that is always above the groundtruth to guarantee safety. In case the on-device predictions exceed the threshold, an 
		on-server high-dimensional corrector $v$ is activated to bridge the gap between device prediction $u$  and true signal $f$ and provide more accurate analysis.}
	\label{fig:scheme}
\end{figure*}

To address these two issues, we consider the following question: \textit{Given a strong but complex classifier $v^*$, how to make it suitable to deploy on edge devices and at the same time making sure the safety regime is guaranteed?}

The answer provided in this paper is designing a new learning architecture by decomposing the monitoring system into two separate parts: a simple on-device model acting as the monitoring tool to provide timely response locally and a complex remote model acting as the corrector to provide accurate modifications for unusual cases~(Fig~\ref{fig:scheme}). With this design, the whole system is able to achieve the following advantages. 
\begin{enumerate}
	\item Approximation accuracy: with assistance of the on-server corrector, we show that the approximation ability of this collaborative system is no worse than the original accurate but complex model~(See Prop~\ref{thm:approxpower}). This is in contrast to the classical model compression methods where the system usually involves trade-off between the model size and the accuracy drops. 
	\item Communication reduction: most of the time the computations and predictions are performed through the local device without the necessity of sending sensitive data to servers. The communication will be triggered 
	only when the local predictions exceed the predefined threshold. 
	\item Safety requirement: the correction term on-server is designed to be everywhere non-positive so local model provides upper-bounded predictions and guarantees the safety requirement. 
\end{enumerate}

\paragraph{Related Work} Our work follows the studies of model compression so that complex deep learning algorithms can be deployed on edge devices. Typical model compaction approaches include  distillation~\cite{Bucilua:2006:MC:1150402.1150464,ba2014deep,romero2014fitnets}, pruning~\cite{hanson1989comparing,gong2014compressing,han2015learning}, compact convolution filters~ \cite{zhai2016doubly,cohen2016group} and dynamic execution~ \cite{DBLP:journals/corr/abs-1806-07568,DBLP:journals/corr/abs-1811-01476}. But in contrast to these studies that only focus on model compression and may sacrifice prediction accuracies, we reuse the original model architecture on server and show that  the approximation ability of our scheme is no worse than the original model through collaboration.

Collaborative inference between the server and edge devices  is also studied in~\cite{kang2017neurosurgeon,li2018edge,choi2018deep} by using various computation partitioning strategies. The main difference between our work and these studies is the safety regime proposed in our work, as we strictly require the local predictions to lie above the groundtruth~(See Fig~\ref{fig:scheme}), making our approach more suitable for scenarios like illness prediction and anomaly detection where safety is more vital than good approximation. 


Our work is also partially related to the studies of 
boosting algorithms~\cite{friedman2001greedy,chen2016xgboost,schwenk2000boosting} and cascading methods~\cite{gama2000cascade,zhao2004constrained,marquez2018deep}, as we both use multiple classifiers to form a strong classifier. The difference between our work and these works lies in the fact that boosting and cascading algorithms usually assume we have only weak classifiers and focus on forming a strong classifier by cascading the decisions of these weak classifiers. But our work assumes that the strong classifier already exists, but unsuitable for edge devices due to its large model size and heavy computation requirement. Therefore, we decompose the strong classifier into one weak monitoring classifier on device and one strong correcting classifier on server.

%

\section{Efficient Monitoring by Model Decomposition}
The previous conceptual scheme shall be formally formulated in this part, followed by three evaluation metrics: approximation error, false positive rate and false negative rate.

\subsection{Problem Formulation}
The introduced model decomposition scheme in the previous section can be summarized as the task of computing a function approximation.  Formally, we let $\Omega \subset \R^d$ be the bounded set of all possible states (e.g. physiological data collected by a wearable device), and let the function $f : \Omega \rightarrow \R$ denote the ground truth target, which maps the input to a scalar output (e.g. some metric that measures the health index of an individual, based on the physiological data). Note that the output of $f$ can also be binary (-1 and 1), in which case we have a classification problem. We assume that an adverse event (e.g. onset of a heat stroke) happens when $f > \gamma$ for some threshold $\gamma$, which for simplicity of presentation we can set to $0$. In the binary classification case, an adverse event is simply $f=1$.

The goal of efficient remote monitoring is to learn an approximation $\hat{f}$ that is accurate, deployable on edge devices and most importantly, safe -- the approximation should signal an adverse event before or when it happens. Moreover, we want the approach to be easily implementable. To this end, we assume that we already have a good hypothesis space $\mathcal{V} \subset C(\Omega)$ ($C(\Omega)$ denotes the space of continuous functions on $\Omega$) that approximates $f$ well, i.e. $\min_{v\in\mathcal{V}} || f - v ||_{\infty} \ll 1$. However, $v^*=\argmin_v || f - v ||_{\infty}$ can be very complex (e.g. deep convolution neural networks), so that $v^*$, despite being able to approximate $f$ well, cannot be directly deployed for monitoring. The question we would like to answer is, given $\mathcal{V}$, can we construct a new hypothesis space so that we can realize efficient monitoring?

\subsection{Model Decomposition}
The approach we propose in this paper is as follows. Consider in addition to $\mathcal{V}$ another hypothesis space $\mathcal{U} \subset C(\Omega)$ that is very simple, so that any function $u$ from it is deployable on an edge device. For example, $\mathcal{U}$ may consist of much smaller neural networks than those in $\mathcal{V}$. Now, we form as an approximation to $f$ in the following form
\begin{align}\label{eq:fhat_construction}
\hat{f} := u - s \sigma(v),
\qquad
u \in \mathcal{U},
\quad
v \in \mathcal{V},
\quad
s >0.
\end{align}
Here, $\sigma : \R \rightarrow (0,1)$ is chosen to be some fixed continuous invertible function whose purpose is to modulate the output from the function $v$ so that it is contained within a bounded interval, whose scale is set by $s$; for instance, a convenient choice for $\sigma$ is the sigmoid function $ 1/(1+\exp(-x))$. For the purpose of theoretical analysis, we hereafter assume that $\sigma$ is twice differentiable with bounded second derivative. Note that these conditions are satisfied by the sigmoid function.

Now, suppose that $u,v,s$ are chosen such that $\hat{f} \approx f$. Then, by construction~\eqref{eq:fhat_construction} we must have $u \geq \hat{f} \approx f$, since both $s$ and $\sigma(v)$ are necessarily positive. Hence, $u$ satisfies both the \emph{efficiency condition} ($\mathcal{U}$ is simple) and the \emph{safety condition} ($u \geq f$), and is a promising candidate for remote monitoring on edge devices. On the other hand, the second term in~\eqref{eq:fhat_construction} serves as a correction on the server side and need only be evaluated when accurate predictions are required (e.g. when $u$ is above the threshold, signaling the possibility of unusual cases).

\subsection{Performance metrics}
The following performance metrics are of interest both in theory and in practice:
\begin{enumerate}[itemsep=0cm]
	\item \textbf{Approximation Error.} Take $p \in [1,\infty]$
	\begin{align}\label{eq:metric_approx}
	\begin{split}
	&\| f - \hat{f} \|_p
	\equiv \| f - (u - s\sigma(v)) \|_p
	\end{split}
	\end{align}
	\item \textbf{False Positive Rate.} Take $\epsilon > 0$
	\begin{align}\label{eq:metric_fp}
	\mu_{\mathrm{FP},\epsilon}
	:=
	{
		\int_{\Omega} 1_{\{f(x) < -\epsilon,u(x)>\epsilon\}} dx
	}
	\end{align}
	\item \textbf{False Negative Rate.} 
	\begin{align}\label{eq:metric_fn}
	\mu_{\mathrm{FN,\epsilon}} :=
	{
		\int_{\Omega} 1_{\{f(x) > \epsilon,u(x)< -\epsilon\}}  dx
	}
	\end{align}
\end{enumerate}
Some remarks on the metrics are in order. For approximation we typically consider the $L^1$ ($p=1$) and $L^\infty$ ($p=\infty$) norms. In the definition of false positive and false negative rates, we introduced a regularization parameter $\epsilon$ representing a margin between the decision boundaries. For understanding sake one may take $\epsilon=0$ but a positive value is useful to rule out pathological cases involving the decision boundary. Also, if $\int_\Omega dx = 1$ then these are true ``rates''. Otherwise they are unnormalized, but this minor point does not affect subsequent results.

\section{Theoretical Analysis}
\label{sec:analysis}
We have established our model decomposition scheme in the previous part, but a few questions still remain unknown: 1) whether this decomposition scheme works well; 2) how to choose the simplified model $u$ and the parameter $s$. The answers shall be provided based on a complete theoretical analysis in the following parts.

\subsection{Approximation Error}
\label{sec:analysis_ae}
The validity of our model decomposition approach rests upon the premise: given that $\mathcal{V}$ approximates $f$ well, does that mean that there must exist $u,v,s$ so that $\hat{f}$ in the form of~\eqref{eq:fhat_construction} also approximates $f$ well? It turns out that the answer is positive under mild conditions on $\mathcal{U}$ and $\mathcal{V}$. 

\begin{restatable}{proposition}{propone}
	\label{thm:approxpower}
	Consider the approximation scheme in~\eqref{eq:fhat_construction}. Suppose that $\mathcal{U},\mathcal{V}$ are linearly closed, meaning that for every $u$ in $\mathcal{U}$ (resp. $\mathcal{V}$), $a u + b \in \mathcal{U}$ (resp. $\mathcal{V}$) for any $a,b\in \R$. Then
	\begin{align}
	\inf_{u\in \mathcal{U}, v\in \mathcal{V}, s>0} \| f - (u - s \sigma(v)) \|_{\infty} \leq \inf_{v\in \mathcal{V}} \| f - v \|_{\infty}.
	\end{align}
\end{restatable}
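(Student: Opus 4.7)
}

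The plan is to show that for any $v \in \mathcal{V}$, we can design a triple $(u,v',s)$ with $u\in\mathcal{U}$, $v'\in\mathcal{V}$, $s>0$ such that $u - s\sigma(v')$ approximates $v$ arbitrarily well in $L^\infty(\Omega)$; the bound on $f$ will then follow from the triangle inequality and taking the infimum over $v \in \mathcal{V}$. The core idea is to exploit the linear closure of $\mathcal{V}$ together with the smoothness of $\sigma$: if we plug $\alpha v + \beta$ into $\sigma$ with $\alpha$ small, a first-order Taylor expansion of $\sigma$ around $\beta$ recovers $v$ linearly, and multiplying by a large compensating factor $s$ converts the remainder term into something that vanishes as $s\to\infty$.

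Concretely, fix $v \in \mathcal{V}$; since $v$ is continuous on the bounded (hence precompact) set $\Omega$, it is bounded with $\|v\|_\infty = M_v < \infty$. Choose $\beta \in \R$ with $\sigma'(\beta) \neq 0$, which exists because $\sigma$ is invertible and $C^1$, so $\sigma'$ cannot vanish identically. For a parameter $s>0$ to be sent to infinity, set
\begin{equation*}
\alpha := -\frac{1}{s\,\sigma'(\beta)}, \qquad v' := \alpha v + \beta, \qquad u := s\,\sigma(\beta).
\end{equation*}
By linear closure, $v' \in \mathcal{V}$, and taking any $u_0 \in \mathcal{U}$ together with $0\cdot u_0 + s\sigma(\beta)$ puts the constant $u$ in $\mathcal{U}$.

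The main estimate is a Taylor expansion of $\sigma$ at $\beta$ applied pointwise: since $\sigma''$ is bounded by some $M$,
\begin{equation*}
\bigl|\sigma(\alpha v(x) + \beta) - \sigma(\beta) - \sigma'(\beta)\,\alpha v(x)\bigr| \;\leq\; \tfrac{M}{2}\,\alpha^2 v(x)^2.
\end{equation*}
Multiplying by $s$ and using the definitions of $u$ and $\alpha$,
\begin{equation*}
\bigl|u - s\sigma(v'(x)) - v(x)\bigr| \;\leq\; \frac{M s\,\alpha^2 M_v^2}{2} \;=\; \frac{M\,M_v^2}{2\,s\,\sigma'(\beta)^2},
\end{equation*}
which tends to $0$ uniformly in $x$ as $s \to \infty$. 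The triangle inequality gives
$\|f - (u - s\sigma(v'))\|_\infty \leq \|f-v\|_\infty + O(1/s)$, so letting $s\to\infty$ and then taking the infimum over $v \in \mathcal{V}$ yields the claim.

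I do not anticipate a serious obstacle; the only delicate point is guaranteeing $\sigma'(\beta) \neq 0$, which is handled by the remark above (invertibility rules out $\sigma'\equiv 0$ for a $C^1$ function). A minor subtlety is that the construction requires $\mathcal{U}$ to be nonempty in order for linear closure to produce the constant $u$, which is implicit in the setup. Everything else is a clean quantitative Taylor estimate combined with the affine flexibility of $\mathcal{V}$.
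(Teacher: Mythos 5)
Your proposal is correct and follows essentially the same route as the paper's own proof: pick $\beta$ (the paper's $z$) with $\sigma'(\beta)\neq 0$, feed the affine rescaling $-v/(s\sigma'(\beta))+\beta$ into $\sigma$, Taylor-expand to second order so that $s\sigma(\beta)-s\sigma(v')$ recovers $v$ up to an $O(1/s)$ remainder controlled by the bound on $\sigma''$, and take $u\equiv s\sigma(\beta)$ via linear closure. The only cosmetic difference is that you fix an arbitrary $v$ and pass to the infimum at the end, whereas the paper starts from a near-minimizer $v$ with $\|f-v\|_\infty\leq\epsilon+\delta$; both orderings of the limits are valid.
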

\begin{proof}
	Let $\epsilon := \inf_{v\in\mathcal{V}}\| f - v \|_\infty$ and fix $\delta > 0$. Then, there exists a $v\in
	\mathcal{V}$ such that $\| f - v \|_\infty \leq \epsilon + \delta$ and $\| v \|_\infty \leq \| f \|_\infty + \epsilon + \delta \leq \| f \|_\infty + 2\epsilon $. By assumption, there exists $z \in \R$ such that $\sigma'(z) \neq 0$. By Taylor's theorem, for each $q\in \R$ and $x\in\R^d$,
	\begin{align*}
	\sigma(z + q v(x)) &= \sigma(z) + \sigma'(z) q v(x) \\
	&+ \frac{1}{2} \sigma''(z + \alpha(x) q v(x)) q^2 {v(x)}^2,
	\end{align*}
	for some $\alpha(x) \in [0,1]$. Let $k := \sup_{x\in\R} |\sigma''(x)|$ and take in the above $q\equiv - 1 / (s\sigma'(z)) $, then we have for any $x\in \Omega$,
	\begin{align*}
	|f(x) - \hat{f}(x)| &\leq | u(x) - s \sigma(z) | + | f(x) - v(x) | \\
	&+ \frac{k}{2 \sigma'(z)^2 s} (\|f\|_\infty + 2 \epsilon)^2.
	\end{align*}
	Now, simply take $s$ large enough so that $\frac{k}{2 \sigma'(z)^2 s} (\|f\|_\infty + 2 \epsilon)^2 \leq \delta$, and then
	take $u(x) \equiv s \sigma(z)$ (which is always possible by linear closure). We then have
	\begin{align}
	|f(x) - \hat{f}(x)| \leq \epsilon + 2\delta.
	\end{align}
	Moreover, $z+qv \in \mathcal{V}$ by the linear closure assumption. Since $\delta$ is arbitrary, this proves the claim.
\end{proof}

\noindent
\textbf{Remark}: The above result shows by collaboration of the on-device model $u$ and the on-server negative corrector $- s\cdot \sigma(v)$, the approximation power is no worse than the original complex model $v^* := \inf_{v\in \mathcal{V}} \| f - v \|_{\infty}$. This is in contrast to previous algorithms where the original model $v$ is discarded and the model's approximation ability usually drops after compression. 


\vspace{0.1in}

Prop~\ref{thm:approxpower} ensures that the combined model $\hat{f}$ to be a good approximator, but it does not imply how to choose the model $u$ and the parameter $s$. To proceed, we need the following assumption.

\begin{assumption}
	\label{assum}
	The underlying groundtruth $f$ admits a decomposition as the linear combination of simpler functions drawn from a collection $\Phi$, namely
	\begin{equation} \label{eq:f_decompseries}
	f = \sum_{i=1}^{\infty} a_i \phi_i ,
	\quad \phi_i : \Omega \rightarrow \R,
	\quad \phi_i \in \Phi
	\end{equation}
\end{assumption}





\noindent
\textbf{Remark}: Any well-behaved function has the above decomposition scheme, including  many practical machine learning models. In neural networks, $\phi_i$ represents the high-level features at the second-last layer and the final classification relies on a linear combination of these feature functions~(e.g., in LeNet~\cite{lecun1998gradient}, the classification on the last layer relies on the 84 high-level feature functions $\phi_i$ in the second-last layer). For explicit kernel methods, $\phi_i$ may represent feature maps, while for Fourier expansions these $\phi_i$'s represent the basis functions.

\vspace{0.1in}

With this assumption, we have a direct way of constructing simple approximations $u$ to $f$ by truncating the series.  To promote the safety condition $u \geq f$, it is necessary to augment the truncated series with an appropriate positive function.  For simplicity, we analyze the setting where we augment the truncated series with a constant term:
\begin{equation} \label{eq:un_truncateplusconst}
u_{n,t} := \sum_{i=1}^{n} a_i \phi_i + t,
\qquad
t > 0.
\end{equation}
Here, $n$ controls the complexity of $u_{n,t}$, which for $t$ large enough will become a safe monitoring function. The following result gives some quantitative estimates in this direction.

\begin{restatable}{proposition}{proptwo}
	\label{thm:residualbound_inf}
	Suppose that $f$ admits the decomposition~\eqref{eq:f_decompseries} and $v^*$ represents a universal approximator so that $\inf_{v \in \mathcal{V}} \| \sigma(v) \|_\infty = 0$. By setting $t=t(n):=\| \sum_{i=n+1}^{\infty} a_i \phi_i \|_{\infty}$ and $s \geq 2t$, we have 
	\begin{align} 
	\label{eq:uniformbound_allcases}
	& \ u_{n,t(n)}  \geq f. \\
	& \inf_{v \in \mathcal{V}} \| f - (u_{n,t(n)} - s \sigma(v)) \|_{\infty}=0. 
	\end{align}
\end{restatable}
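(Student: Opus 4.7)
The plan is to treat the two conclusions as essentially independent computations, both of which follow by reformulating $f - \hat f$ so that the tail $\sum_{i>n} a_i \phi_i$ becomes isolated and can be balanced against the constant $t$ and the correction $s\sigma(v)$.

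For the first claim ($u_{n,t(n)} \geq f$) I would just unfold the definitions: writing
\begin{equation*}
u_{n,t(n)}(x) - f(x) \;=\; t(n) - \sum_{i=n+1}^{\infty} a_i \phi_i(x),
\end{equation*}
the right-hand side is non-negative at every $x \in \Omega$ directly from the definition $t(n) = \bigl\|\sum_{i=n+1}^\infty a_i \phi_i\bigr\|_\infty$, since $\sum_{i=n+1}^\infty a_i \phi_i(x) \leq t(n)$ pointwise. This step is essentially a one-liner and presents no real obstacle.

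For the second claim I would rewrite the residual error as
\begin{equation*}
f(x) - \bigl(u_{n,t(n)}(x) - s\sigma(v(x))\bigr) \;=\; s\bigl[\sigma(v(x)) - g(x)\bigr],
\qquad
g(x) := \frac{t(n) - \sum_{i=n+1}^{\infty} a_i \phi_i(x)}{s}.
\end{equation*}
The key observation is that by part one and the definition of $t(n)$, the numerator $t(n)-\sum_{i>n} a_i\phi_i(x)$ lies in $[0,2t(n)]$, so under the choice $s \geq 2t(n)$ the target $g$ takes values in $[0,1]$. Then I would appeal to the universal approximation hypothesis on $\mathcal{V}$ (interpreting the displayed condition as saying that $\sigma(\mathcal{V})$ is dense in $C(\Omega)$-valued $[0,1]$-functions) to find, for any $\delta>0$, some $v \in \mathcal{V}$ with $\|\sigma(v) - g\|_\infty \leq \delta/s$. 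Multiplying by $s$ yields $\|f-\hat f\|_\infty \leq \delta$, and letting $\delta \downarrow 0$ gives the infimum zero.

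The main technical subtlety, and the only place I expect friction, is the interpretation of the assumption ``$\inf_{v\in\mathcal{V}} \|\sigma(v)\|_\infty = 0$'': as literally stated this is not what the argument needs, so I would read it as shorthand for the standard sigmoidal universal approximation property, namely that any continuous function on $\Omega$ with range in the image of $\sigma$ can be uniformly approximated by $\sigma\circ v$ with $v\in\mathcal{V}$. A minor second subtlety is that $g$ may touch the endpoints $0$ or $1$, which lie outside the open range of $\sigma$; this is harmless because we only need $\varepsilon$-uniform (not exact) approximation, but it deserves a one-sentence remark. With these interpretive points settled, both statements follow by assembling the estimates above.
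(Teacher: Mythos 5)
Your proposal is correct and follows the same overall strategy as the paper: part one is the same one-line observation from the definition of $t(n)$, and part two rests on the same reduction, namely isolating the tail so that the correction term must match the target $g = \tfrac{1}{s}\bigl(t(n) - \sum_{i>n} a_i \phi_i\bigr)$, which the condition $s \geq 2t(n)$ forces into $[0,1]$. The one place you diverge is the final approximation step: the paper writes $t(n) - \sum_{i>n} a_i\phi_i = s\,\sigma(\sigma^{-1}(g))$, pulls out the Lipschitz constant $L = \sup_y|\sigma'(y)|$ to get the bound $sL\inf_v \|v - \sigma^{-1}(g)\|_\infty$, and then invokes density of $\mathcal{V}$ in $C(\Omega)$; you instead approximate $g$ directly by $\sigma(v)$. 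Your route is slightly cleaner and, notably, handles a point the paper glosses over: the paper asserts that $g$ takes values in the open interval $(0,1)$, but $g$ can in fact touch $0$ (wherever the tail attains its supremum), in which case $\sigma^{-1}(g)$ is undefined or unbounded and cannot be uniformly approximated by continuous functions on a compact domain. Your observation that only $\varepsilon$-approximation of $g$ is needed, not exact attainment of the endpoints of $\sigma$'s range, sidesteps this. You are also right that the stated hypothesis $\inf_{v\in\mathcal{V}}\|\sigma(v)\|_\infty = 0$ is not literally what either proof uses; both require the stronger universal-approximation reading, and the paper's own proof silently makes the same substitution.
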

\begin{proof}[Proof of Proposition \ref{thm:residualbound_inf}]
	The first bound $u_{n,t(n)} \geq f$ comes directly from the definition $t:= \| \sum_{i=n+1}^{\infty} a_i \phi_i \|_{\infty}$. 
	
	\vspace{0.1in}
	Next, by noting that
	\begin{align*}
	&\quad \inf_{v \in \mathcal{V}}  \| f - (u_n - s \sigma(v)) \|_\infty \\ 
	&= \inf_{v \in \mathcal{V}}  \| s \sigma(v) - (t(n) - \sum_{i=n+1}^{\infty} a_i \phi ) \|_\infty \\
	&\leq Ls  \inf_{v \in \mathcal{V}}
	\left\| 
	v - \sigma^{-1}
	\left(
	\frac{1}{{s}}
	\left(
	t(n) - \sum_{i=n+1}^{\infty} a_i \phi_i
	\right)
	\right)
	\right\|_{\infty},
	\end{align*}
	where the inequality comes from $$t(n) - \sum_{i=n+1}^{\infty} a_i \phi_i  = s \sigma \circ \sigma^{-1} [ (1/s) (t(n) - \sum_{i=n+1}^{\infty} a_i \phi ) ]$$ and $L := \sup_{y} | \sigma'(y) |$.
	
	Noticing we have $s \geq 2t(n)$ and therefore $
	\frac{1}{{s}}
	\left(
	t(n) - \sum_{i=n+1}^{\infty} a_i \phi_i
	\right)
	$ takes value in $(0,1)$. Since $v$ is a universal approximator to match any value in $(0,1)$, the r.h.s equals 0.
\end{proof}

\noindent
\textbf{Remark}: (1) By construction of $t(n)$, $u_{n,t(n)} \geq f$ guarantees the safety requirement as the false negative rate is always 0. 

(2) By assuming for simplicity that $\mathcal{V}$ is an universal approximating class for continuous functions (e.g. very large neural networks), the estimate says that as long as the scale $s$ exceeds $2t(n)$, the infimum on the right hand side is 0, i.e. the combined prediction $\hat{f}$ approximates $f$ well. Compared with Proposition \ref{thm:approxpower} which may require very large $s$, here $s$ only needs to exceed $2t(n)$, which converges to 0 as $n\rightarrow\infty$.

(3) The choice of $t(n) = \| \sum_{i=n+1}^{\infty} a_i \phi_i\|_{\infty}$ captures the \emph{dynamic range} of the residual term.  Subsequently, the choice of $u_{n,t(n)}$ is most useful if the series \eqref{eq:un_truncateplusconst} converges rapidly.  More generally, Proposition \ref{thm:residualbound_inf} is most useful when applied in conjunction with function classes $\Phi$ that allow us to expand the target function $f$ with rapidly decaying coefficients. For example, if $\Phi$ is a class of feature maps, then we want to choose this class so that the ground truth is well-approximated by only a few of the feature maps. This can be done for example by training with regularizers that promote sparsity in the combination coefficients.


\subsection{False Positive Rate}
\label{sec:analysis_fp}
Furthermore, observe that $s$ is intimately connected with the false positive rate and they increase together. The following result makes this precise.
\begin{restatable}{proposition}{propthree}
	\label{thm:fp_l1bound} 
	Let $\hat{f} = u - s\sigma(v)$ and $\|f - \hat{f}\|_{\infty} \leq \delta$, then
	\begin{equation*}
	\mu_{\mathrm{FP},\epsilon}
	\leq
	\frac{1}{2\epsilon} (\delta + s) \mathrm{vol}(\Omega),
	\end{equation*}
	where $\mathrm{vol}(\Omega) := \int_{\Omega} dx$.
\end{restatable}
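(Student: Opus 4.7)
The plan is to squeeze the false positive set between a pointwise lower bound on $u - f$ that holds on that set and a uniform upper bound that holds everywhere on $\Omega$, then close the gap with a Markov-style comparison.

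First I would unpack the event. On $A := \{x \in \Omega : f(x) < -\epsilon,\ u(x) > \epsilon\}$ the two defining inequalities add to give $u(x) - f(x) > 2\epsilon$ pointwise, so in particular $u - f$ is strictly positive on $A$. Next, I would derive a uniform upper envelope on all of $\Omega$: since $u = \hat{f} + s\sigma(v)$ with $\sigma(v) \in (0,1)$, and $|\hat{f} - f| \leq \delta$ by hypothesis, one gets
\[
u(x) - f(x) = (\hat{f}(x) - f(x)) + s\sigma(v(x)) \leq \delta + s
\qquad \text{for every } x \in \Omega.
\]

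Combining the two via a Markov-type integration then yields
\[
2\epsilon \cdot \mu_{\mathrm{FP},\epsilon} \leq \int_A (u - f)\, dx \leq \int_\Omega (u - f)_+\, dx \leq (\delta + s)\, \mathrm{vol}(\Omega),
\]
and dividing through by $2\epsilon$ produces exactly the claimed inequality. There is no genuine technical obstacle here — the argument is essentially a one-step Markov inequality once the two-sided control on $u-f$ is in place. The conceptual content worth highlighting is the tension the bound exposes: Proposition~\ref{thm:approxpower} may demand a large scale $s$ to drive the approximation error down, while the present estimate shows that $s$ enters linearly into the false positive rate, so in practice $s$ must be tuned as a compromise between approximation quality and false alarms.
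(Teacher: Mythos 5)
Your argument is correct and is essentially the paper's own proof: both bound $2\epsilon\,\mu_{\mathrm{FP},\epsilon}$ by integrating the pointwise gap $u-f$ (which exceeds $2\epsilon$ on the false-positive set) and then control that gap uniformly by $\delta+s$ via $u-f=(\hat f-f)+s\sigma(v)$. The only cosmetic difference is that you use the positive part $(u-f)_+$ where the paper uses $|f-u|$ with the triangle inequality; the estimate is the same.
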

\begin{proof}
	We have
	\begin{align*}
	2\epsilon \mu_{\mathrm{FP},\epsilon}
	&\leq \int_{\Omega} | f(x) - u(x) |
	1_{\{ u>\epsilon, f<-\epsilon \}}  dx \\
	&\leq \int_{\Omega} | f(x) - u(x) | dx\\
	&\leq
	\int_{\Omega} | f(x) - \hat{f}(x) | + s| \sigma(v(x)) |dx
	\leq
	(\delta + s) \mathrm{vol}(\Omega).
	\end{align*}
\end{proof}
\noindent
\textbf{Remark}: This shows that the upper bound for false positive rate increases with $s$, and smaller $s$ is preferred to minimally incur false positive. To combine the result with Prop~\ref{thm:residualbound_inf}, setting $s=2t$ is sufficient to obtain the minimal false positive cases.

Moreover, if we put into $\delta$ the approximation results in Proposition \ref{thm:residualbound_inf}, then the false positive rate again involves a trade-off between $n$ (which decreases $\delta$) and $s$. In summary, estimates~\ref{thm:residualbound_inf} and~\ref{thm:fp_l1bound} are quantitative estimates of the performance trade-off: for fixed model complexity for the on-device monitoring model (fixed $n$), increasing the scale of the server-side correction ($s$) improves overall model accuracy but also increases false positive rates. On the other hand, for a fixed scale, increasing monitoring model complexity improves both the approximation quality and the false positive rate, at the cost of heavier computation.

\subsection{False Negatives Rates} 
\label{sec:analysis_fn}
Prop~\ref{thm:residualbound_inf} already guarantees the false negative to be always 0 by augmenting a positive constant $t$ based on the size of the residual term $\sum_{i=n+1}^{\infty} a_i \phi_i$ in the infinity-norm. This of course relies on the assumption that this residual term converges uniformly so it is finite, which holds in most practical scenarios.

For completeness of the analysis,  it is of interest to understand the performance of our framework in case this uniform convergence is violated or an offset smaller than $\| \sum_{i=n+1}^{\infty} a_i \phi \|_{\infty}$ is chosen. An immediate consequence  is that we violate the safety requirement $u \geq f$; i.e., we incur a False Negative instance.  The analyses developed in Propositions \ref{thm:residualbound_inf} and \ref{thm:fp_l1bound} are applicable for quantifying the extent by which our safety requirement is violated, as well as the size of the residual error $\| f - \hat{f} \|$, and we describe these result formally in Proposition \ref{thm:l1_violatesatefy}.


\begin{restatable}{proposition}{propfour}
	\label{thm:l1_violatesatefy}
	Suppose that $f$ admits the decomposition \eqref{eq:f_decompseries} and that we choose $u_{n,t}$ as \eqref{eq:un_truncateplusconst}.  We have the following bound:
	\begin{equation*}
	\mu(\Omega_{\mathrm{FN,\epsilon}}) \leq \frac{1}{1/(2\epsilon+t)^2}\| \sum_{i=n+1}^{\infty} a_i \phi_i \|_{2}^{2}.
	\end{equation*}
	Define the region $\Omega_{-t,s-t} := \{ x : -t < \sum_{i=n+1}^{\infty} a_i \phi_i (x) < s-t \}$.  We then have the following bound on the residual error $\min_{v \in \mathcal{V}} \| f - (u_{n,t} - s \sigma(v)) \|_{1}$ is bounded above by the following:
	\begin{eqnarray*}
		& & s L \inf_{v \in \mathcal{V}} \left\| 1_{\Omega_{-t,s-t}} \times \left( v - \sigma^{-1} \left( \frac{t - \sum_{i=n+1}^{\infty} a_i \phi_i }{s} \right) \right) \right\|_{1} \\
		& + & \left( \sqrt{ \frac{1}{t^2} + \frac{1}{(s-t)^2} } + \max(t,|s-t|) \left(\frac{1}{t^2} + \frac{1}{(s-t)^2} \right) \right) \| \sum_{i=n+1}^{\infty} a_i \phi_i \|_{2}^{2}.
	\end{eqnarray*}
	In particular, if we pick $t = s/2$, then the latter term is $\lesssim (1/s) \| \sum_{i=n+1}^{\infty} a_i \phi_i \|_{2}^{2}$.
\end{restatable}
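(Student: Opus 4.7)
The plan is to reduce both bounds to tail estimates on $r := \sum_{i=n+1}^{\infty} a_i\phi_i$ via the identity $u_{n,t} - f = t - r$. For the false-negative measure, I would observe that if $f(x) > \epsilon$ and $u_{n,t}(x) < -\epsilon$ then $r(x) = f(x) - u_{n,t}(x) + t > 2\epsilon + t$, so $\Omega_{\mathrm{FN},\epsilon} \subseteq \{r > 2\epsilon + t\}$, and a Chebyshev inequality applied to $|r|^2$ immediately yields $\mu(\Omega_{\mathrm{FN},\epsilon}) \leq \|r\|_2^2 / (2\epsilon+t)^2$.

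For the residual error I would start from $f - (u_{n,t} - s\sigma(v)) = s\sigma(v) - (t - r)$, note that $s\sigma(v) \in (0,s)$, and split $\Omega$ into the good region $\Omega_{-t,s-t}$ where $(t-r)/s \in (0,1)$ and its complement. On the good region a mean value theorem argument for $\sigma$ (exactly as in the proof of Proposition~\ref{thm:residualbound_inf}) gives the pointwise bound $|s\sigma(v) - (t-r)| \leq sL\,|v - \sigma^{-1}((t-r)/s)|$ with $L = \sup|\sigma'|$; integrating and taking the infimum over $v \in \mathcal{V}$ produces the first term of the claimed estimate. On the complement, $(t-r)/s$ lies outside $(0,1)$, so $s\sigma(v)$ is forced to saturate, and the best achievable pointwise error, attained in the limit as $v \to \pm\infty$ on each of the two sub-tails, is $|t - r|$. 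I would split this complement into the two sub-regions where $r$ is too positive and where $r$ is too negative, and on each bound the $L^1$ mass of $|t - r|$ by combining Cauchy--Schwarz, $\int_B |r|\,dx \leq \|r\|_2 \sqrt{\mu(B)}$, with the Chebyshev estimates $\mu(\{|r| \geq \tau\}) \leq \|r\|_2^2/\tau^2$ at thresholds $\tau = t$ and $\tau = s - t$, absorbing the additive offsets $t$ and $s-t$ into the $\max(t,|s-t|)$ factor. Collecting contributions produces the prefactor $\sqrt{1/t^2 + 1/(s-t)^2} + \max(t,|s-t|)(1/t^2 + 1/(s-t)^2)$ multiplying $\|r\|_2^2$, and substituting $t = s/2$ into this prefactor gives the stated $\lesssim 1/s$ scaling.

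The main obstacle is the tail bookkeeping: on each saturated sub-region one must simultaneously control the measure of the tail set, the $L^1$ mass of $|r|$ there, and the additive offset between $t - r$ and the limiting value ($0$ or $s$) of $s\sigma(v)$, and then combine Cauchy--Schwarz and Chebyshev so that the constants aggregate exactly into the stated form rather than a looser but asymptotically equivalent one. The good-region contribution is essentially a one-line reuse of Proposition~\ref{thm:residualbound_inf}; it is this saturated-tail bookkeeping that carries the analysis.
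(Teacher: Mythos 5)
Your plan matches the paper's proof essentially step for step: the false-negative bound is the same Chebyshev argument (and you correctly read the intended bound as $\|\sum_{i=n+1}^{\infty} a_i\phi_i\|_2^2/(2\epsilon+t)^2$; the prefactor $\tfrac{1}{1/(2\epsilon+t)^2}$ displayed in the statement is evidently a typo), while the residual bound uses the identical split into $\Omega_{-t,s-t}$ and its complement, the same Lipschitz estimate on the good region, and the same triangle inequality plus Cauchy--Schwarz plus Chebyshev bookkeeping on the saturated tails. There are no substantive differences from the paper's argument.
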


\begin{proof}[Proof of Proposition \ref{thm:l1_violatesatefy}]
	The first bound follows from an application of the Chebyshev's inequality.  Next we evaluate the integral of the residual error over the regions $\Omega_{-t,s-t}$ and $\Omega_{-t,s-t}^{c}$, and apply the triangle inequality to arrive at the bound
	\begin{eqnarray*}
		& & \| f - (u_{n,t} - s \sigma(v)) \| \\
		& \leq & \| 1_{\Omega_{-t,s-t}} \times (f - (u_{n,t} - s \sigma(v))) \| + \| 1_{\Omega_{-t,s-t}}^{c} \times ( f - (u_{n,t} - s \sigma(v)) ) \|.
	\end{eqnarray*}
	For the first term we have
	\begin{eqnarray*}
		& & \left\|1_{\Omega_{-t,s-t}} \times \left( s \sigma(v) - ( t - \sum_{i=n+1}^{\infty} a_i \phi_i ) \right) \right\| \\ & \leq & s L \left \| 1_{\Omega_{-t,s-t}} \times \left( v - \sigma^{-1} \left( \frac{t - \sum_{i=n+1}^{\infty} a_i \phi_i }{s} \right) \right) \right\|.
	\end{eqnarray*}
	Note that the latter term is valid because the image of the function $(1/s)(t - \sum_{i=n+1}^{\infty} a_i \phi_i )$ lies within the domain of $\sigma^{-1}$ whenever $x \in {\Omega_{-t,s-t}}$.  For the second term we have
	\begin{eqnarray*}
		& & \| 1_{\Omega_{-t,s-t}}^{c} \times ( f - (u_{n,t} - s \sigma(v)) ) \|_{1} \\
		& \leq & \| 1_{\Omega_{-t,s-t}}^{c} \times ( \sum_{i=n+1}^{\infty} a_i \phi_i) \|_{1} + \| 1_{\Omega_{-t,s-t}}^{c} \times ( t - s\sigma(v)) \|_{1} \\
		& \leq & \| 1_{\Omega_{-t,s-t}}^{c} \times ( \sum_{i=n+1}^{\infty} a_i \phi_i) \|_{1} + \max(t,|s-t|) \times \mu ({\Omega_{-t,s-t}}^{c}).
	\end{eqnarray*}
	By combining both inequalities and subsequently taking the infimum over $\mathcal{V}$ we arrive at the following:
	\begin{eqnarray*}
		\min_{v \in \mathcal{V}} \| f - (u_{n,t} - s \sigma(v)) \|_{1} & \leq & \| 1_{\Omega_{-t,s-t}}^{c} \times ( \sum_{i=n+1}^{\infty} a_i \phi_i) \|_{1} + \max(t,|s-t|) \times \mu ({\Omega_{-t,s-t}}^{c}) \\
		& & + ~~ s L \inf_{v \in \mathcal{V}} \left \| 1_{\Omega_{-t,s-t}} \times \left( v - \sigma^{-1} \left( \frac{t - \sum_{i=n+1}^{\infty} a_i \phi_i }{s} \right) \right) \right\|_{1}.
	\end{eqnarray*}
	Next, by applying the Cauchy-Schwarz inequality we have the bound
	\begin{equation*}
	\| \sum_{i=n+1}^{\infty} 1_{\Omega_{-t,s-t}^{c}} \times a_i \phi_i \|_{1} \leq \| 1_{\Omega_{-t,s-t}^{c}} \|_{2} \| \sum_{i=n+1}^{\infty}  a_i \phi_i \|_{2} = \sqrt{\mu( \Omega_{-t,s-t}^{c})} \times \| \sum_{i=n+1}^{\infty}  a_i \phi_i \|_{2}.
	\end{equation*}
	By applying Chebyshev's inequality appropriately we arrive at the bounds
	\begin{equation*}
	\mu( \Omega_{-t,s-t}^{c}) \leq \left(\frac{1}{t^2} + \frac{1}{(s-t)^2} \right) \| \sum_{i=n+1}^{\infty}  a_i \phi_i \|_{2}^{2}.
	\end{equation*}
	The result follows by combining these bounds.
\end{proof}

\noindent
\textbf{Remark}:
Our result shows that, under the assumption that the series \eqref{eq:f_decompseries} converges in the L2-norm, the False Negative rate is effectively governed by how quickly the series $ \sum_{i=n+1}^{\infty} a_i \phi_i $ vanishes as well as the size of the offset $t$.  Our bound of the error $\|f-\hat{f}\|$ in Proposition \ref{thm:l1_violatesatefy} comprises two key components.  The first quantity depends on the expressive power of $\mathcal{V}$ in approximating the function $f$, and it may potentially increase with larger $s$.  The second quantity captures the residue of the function $f$ that cannot be captured by $s\sigma(v)$.  In contrast with the first quantity, it is independent of $\mathcal{V}$ and it depends \emph{inversely} with the parameter $s$.


\subsection{Summary and Practical Implications} 
\label{sec:practical_imp}
To summarize, Prop~\ref{thm:approxpower} indicates for arbitrary on-device structure $u \in \mathcal{U}$, by collaborating with on-server network $v \in \mathcal{V}$ and selecting a proper parameter $s$, the model decomposition scheme is always guaranteed to perform no worse than the original complex model $v^*$. With further Assump~\ref{assum} that is applicable to most deep learning models, the on-device model structure can be precisely truncated from the original model structure $\mathcal{V}$. We illustrate applications of the bounds in Propositions \ref{thm:residualbound_inf} and \ref{thm:fp_l1bound} with a series of examples.  

\vspace{0.1in}
\noindent
\textbf{General Case}: In the general case, the on-device network can be truncated as $u_{n,t(n)}$ from the original model, with an augmented term $t$ to capture the residuals as shown in (\ref{eq:un_truncateplusconst}). By combining the results of Prop~\ref{thm:residualbound_inf} and Prop~\ref{thm:fp_l1bound}, it's sufficient to set $t=t(n):=\| \sum_{i=n+1}^{\infty} a_i \phi_i \|_{\infty}$ and $s = 2t$ to obtain ideal approximation with the lowest FP rate, and the FN rate is strictly 0 in this scenario.

\noindent
\textbf{Exponential Decay}: In case the coefficients in (\ref{eq:f_decompseries}) decays in a specific trend, for example the coefficients $a_i$ have exponential decay, i.e.  $a_i = \rho^{i-1}$ for some fixed $\rho\in(0,1)$ with $\sup_{\phi\in\Phi} \| \phi \|_{\infty} < \infty$, we have more concrete results. In this case, Prop~\ref{thm:residualbound_inf} says that picking $s \sim \rho^{n} / (1-\rho)$ ensures both the positivity criterion and accurate approximation. Moreover, Prop~\ref{thm:fp_l1bound} suggests that $\mu_{\mathrm{FP},\epsilon} \lesssim (s + 2\rho^{n}/(1-\rho))/\epsilon$.

\noindent
\textbf{Power-law Decay}: Suppose instead that $a_i$ have power-law decay, i.e.  $a_i = (1/i)^{\alpha}$ for some fixed $\alpha > 1/2$. Assume further that the $\phi_i$'s in~\eqref{eq:f_decompseries} are orthonormal. First we have the bound $ \| \sum_{i=n+1}^{\infty}  a_i \phi_i \|_{2}^{2} \lesssim 1/n^{2\alpha-1}$. Thus, the upper bound on the approximation error is $\mathcal{O}(1/n^{2\alpha-1})$, and we may set $s \approx 1/n^{2\alpha-1}$. The false positive rate has estimate $\mu_{\mathrm{FP},\epsilon} \lesssim (s + 2 /n^{2\alpha-1})/\epsilon$.

\section{Experimental Validations}
In this part, we conduct a series of experiments to validate our theoretical analysis in the previous section.

\begin{figure*}[!t]
	\centering
	\subfigure[Approximation error for $\hat{f}$ on server. ]
	{\label{syn_loss}
		\includegraphics[width=.42\linewidth]{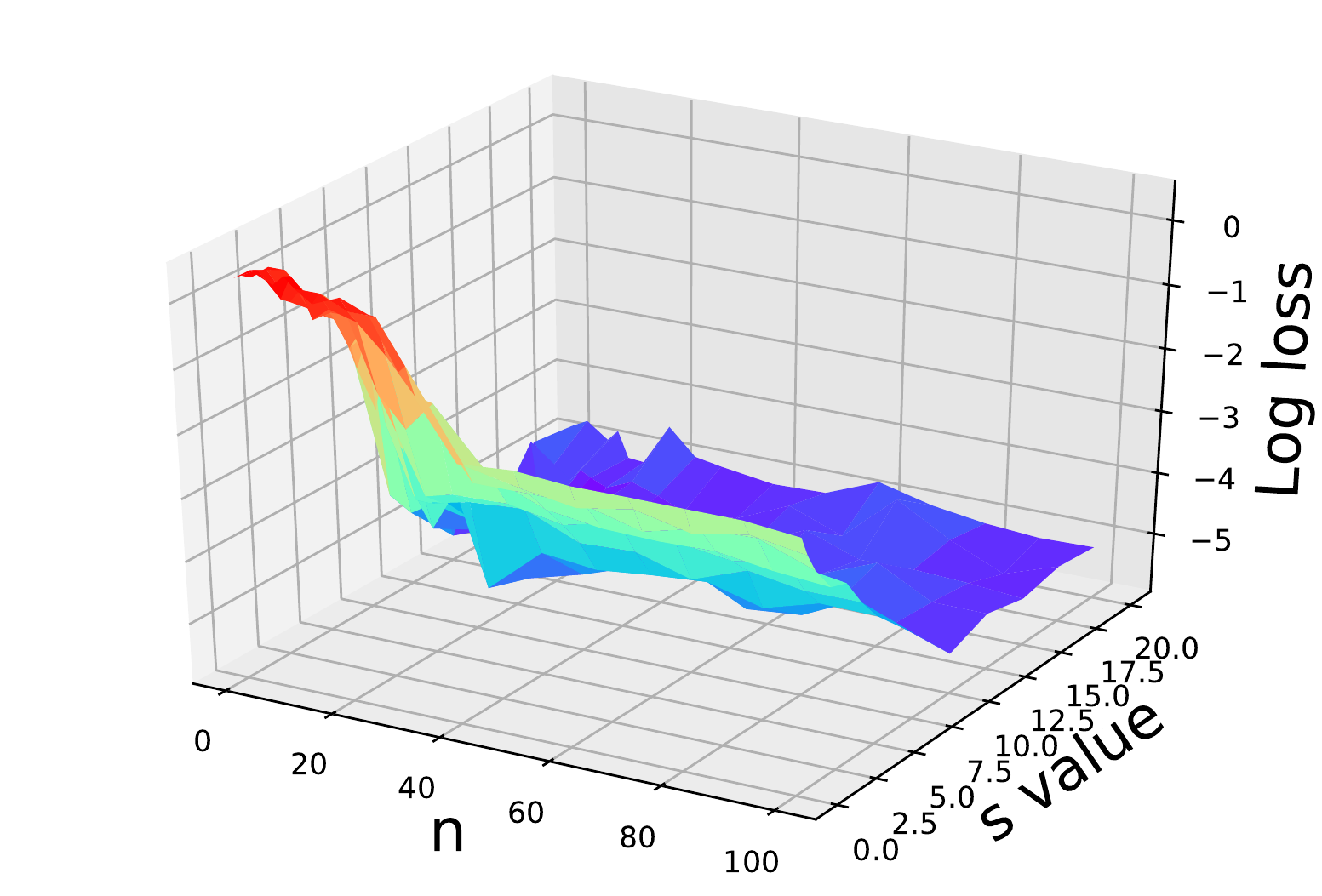}
	}
	\subfigure[FN rate for $u$ on device.]
	{\label{syn_u_FN}
		\includegraphics[width=.42\linewidth]{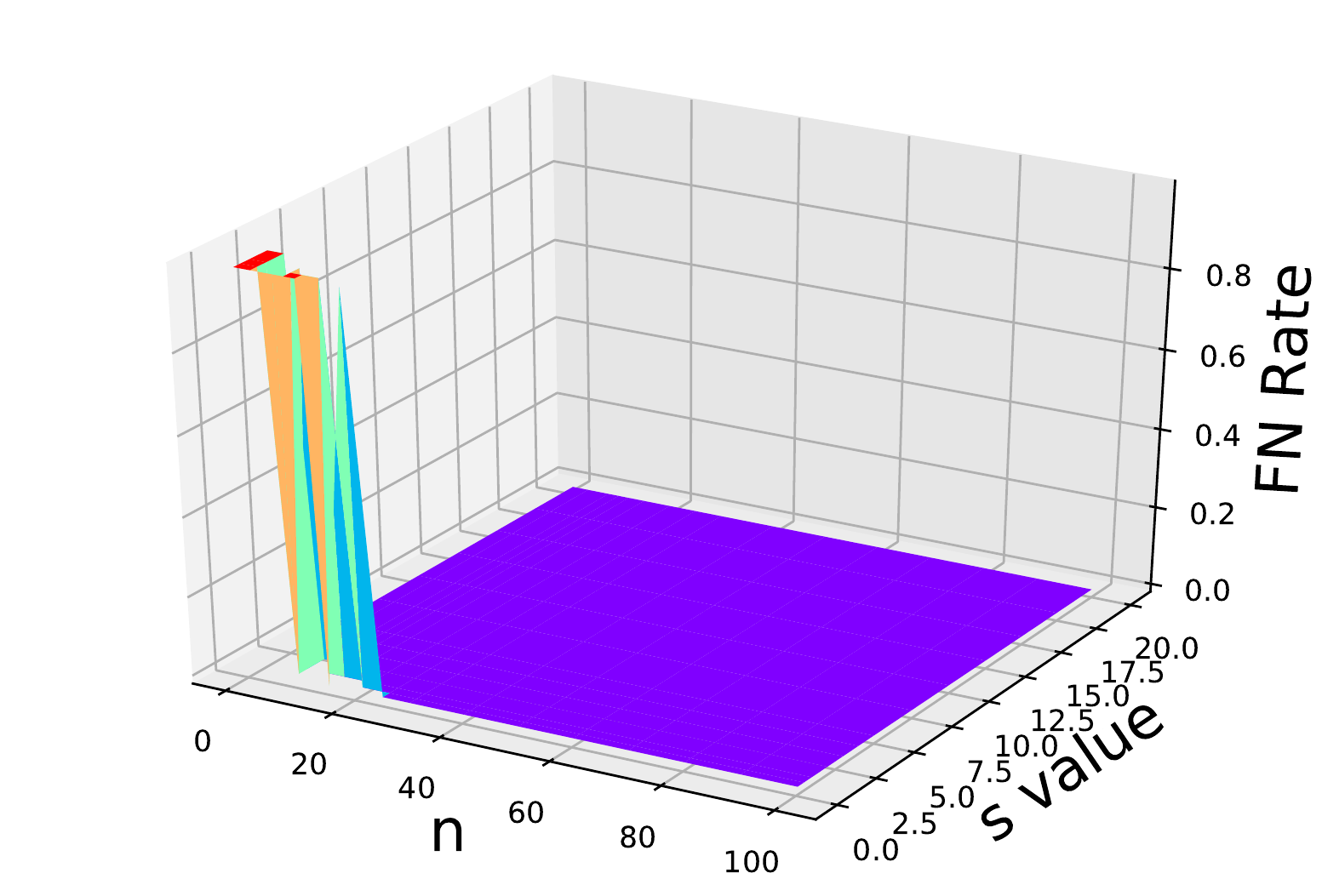}
	}
	\subfigure[FP rate for $u$ on device.]
	{\label{syn_u_FP}
		\includegraphics[width=.42\linewidth]{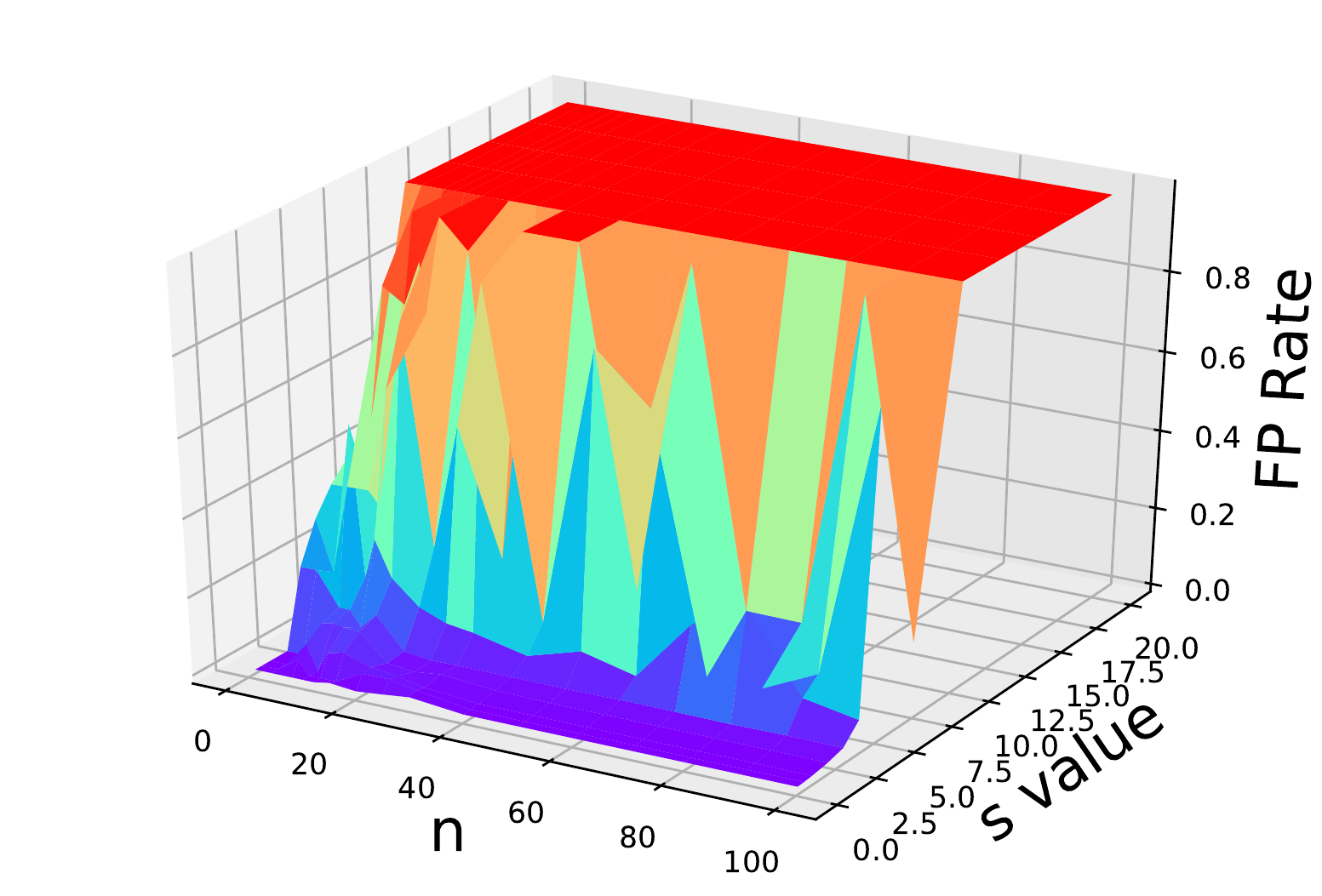}
	}
	\subfigure[FP rate for $\hat{f}$ on server.]
	{\label{syn_FP}
		\includegraphics[width=.42\linewidth]{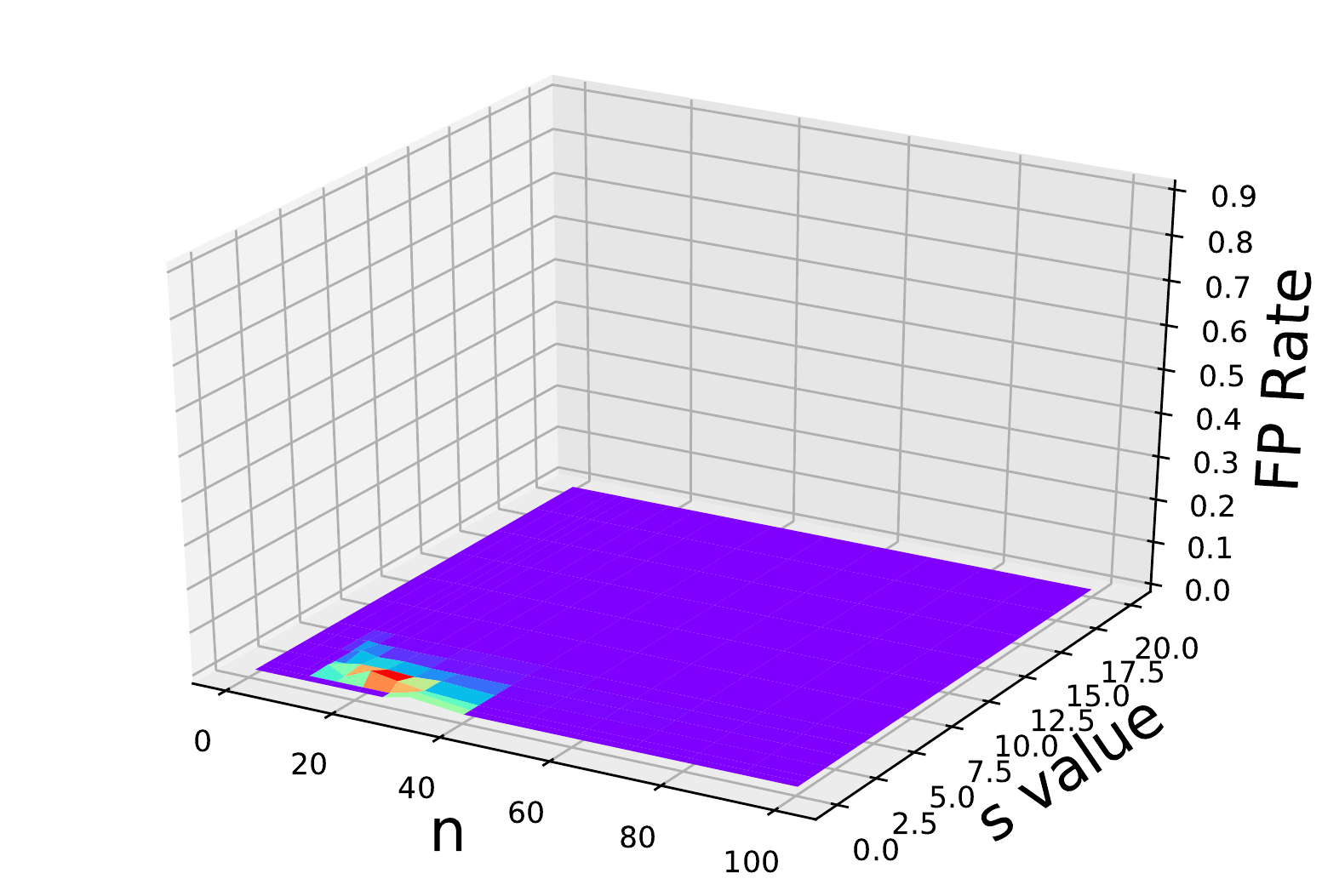}
	}
	\caption{Synthetic experiments.}
	\label{fig:Syn}
\end{figure*}
\subsection{Synthetic Dataset}
The first experiment we consider is a synthetic dataset to simulate the exponential decay case in Sec~\ref{sec:practical_imp}. The goal of this simulation is to validate our analysis in previous section, as well as examining the model efficiency. 

The dataset is generated by sampling input $x \in \mathbb{R}$ from a uniform distribution $\mathbb{U}[-3,3]$ and computing its label as
\begin{equation*}
f(x) = \sum\nolimits_{i=1}^{100}  0.9^{i-1} \cos(i  x),
\end{equation*}
which is equivalent to setting $\phi_i$'s to cosine functions at different frequencies in Eq~(\ref{eq:f_decompseries}) and $a_i=0.9^{i-1}$. In practice, a FC(1,16,32,64,100,1) neural network structure $\mathcal{V}$ can efficiently train the loss to approximately 0, hence we select it as our on-server structure. The on-device monitoring structure $\mathcal{U}$ is selected by truncating and augmenting $\mathcal{V}$ as Eq~(\ref{eq:un_truncateplusconst}). Once the on-device and on-server architectures are selected, the entire model $\hat{f} = u - s \sigma(v)$ is trained end-to-end using the Adam optimizer~\cite{kingma2014adam}.

We first validate the analysis presented in Sec~\ref{sec:analysis_ae} to \ref{sec:analysis_fn} by showing the landscape of the three metrics in Fig~\ref{fig:Syn}. 
\begin{enumerate}
	\item Prop~\ref{thm:residualbound_inf} says we can  obtain good approximation in Eq~(\ref{eq:uniformbound_allcases}) when the condition is satisfied: $s \geq 2t=2\| \sum_{i=n+1}^{\infty} a_i \phi_i \|_{\infty}$. This could be achieved either by choosing a large $s$ or increasing $n$ (hence decreasing $t$), which explains why in Fig~\ref{syn_loss} the loss significantly drops down either we use a large $n$  or a large $s$.
	
	\item For safety concerns, the FN rate should be minimal, and Fig~\ref{syn_u_FN} indicates the FN rate is almost 0 everywhere except for both small $n$ and $t$ where the requirement $s \geq 2t$ is violated.
	
	\item Prop~\ref{thm:fp_l1bound} predicts the on-device FP rate climbs up as we increase $s$, and a similar phenomenon can be observed in Fig~\ref{syn_u_FP}. But for all choices of $s$, these FP cases can be effectively corrected on server, and therefore is FP rate is almost 0 everywhere in Fig~\ref{syn_FP}.
\end{enumerate}

\begin{figure}[!h]
	\centering
	\includegraphics[width=.6\linewidth]{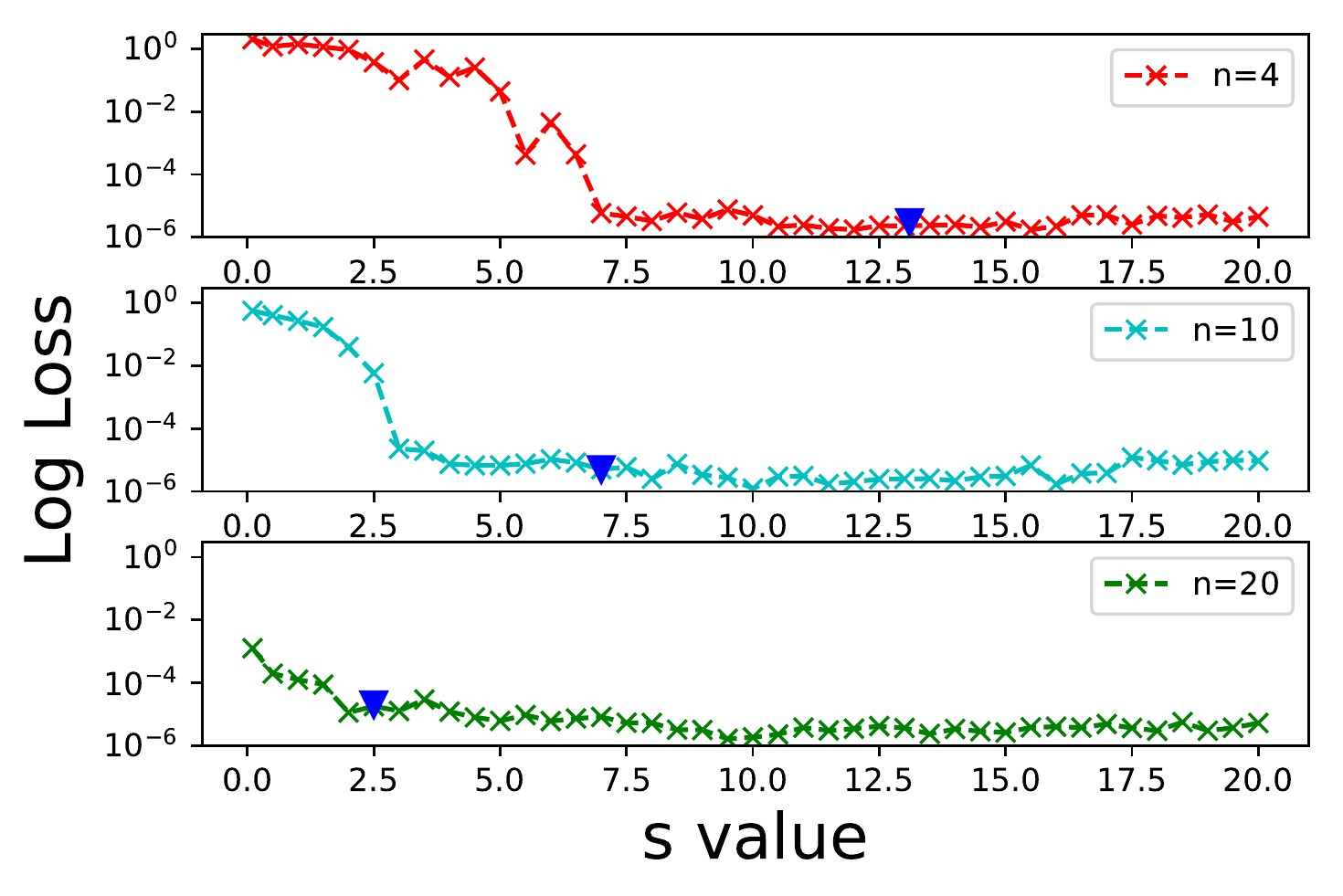}
	\vspace{-0.2in}
	\caption{Approximation Error for $\hat{f}$ w.r.t choices of $s$. The blue triangle represents the suggested theoretical value.}
	\label{fig:Syn_loss}
\end{figure}

The above results show that the decomposed model can achieve a low approximation error with safety guarantee (0 false negative rate) with proper choices of $(s,n)$. Indeed,  our analysis in Sec~\ref{sec:practical_imp} provides a more concrete choice of $s \sim \rho^{n} / (1-\rho)$. Fig~\ref{fig:Syn_loss} shows how the practical approximation error goes with different $s$, and the results indicate that the theoretical choices all obtain good approximation for these three cases. Since we use $\sum_{i=n+1}^{100} |a_i| $ to approximate $\| \sum_{i=n+1}^{100} a_i \phi_i\|_{\infty}$, there is a gap between the theoretical and practical optimal value.

\subsection{Financial Dataset}
The above experiment validates the possibility of our model decomposition strategy, and shows our analysis developed in Sec~\ref{sec:analysis} can guide our practical architecture designs.  In the following parts, we further demonstrate the broad applicability of our proposed decomposition scheme on a real-world dataset. 

\begin{figure*}[!t]
	\centering
	\includegraphics[width=0.88\linewidth]{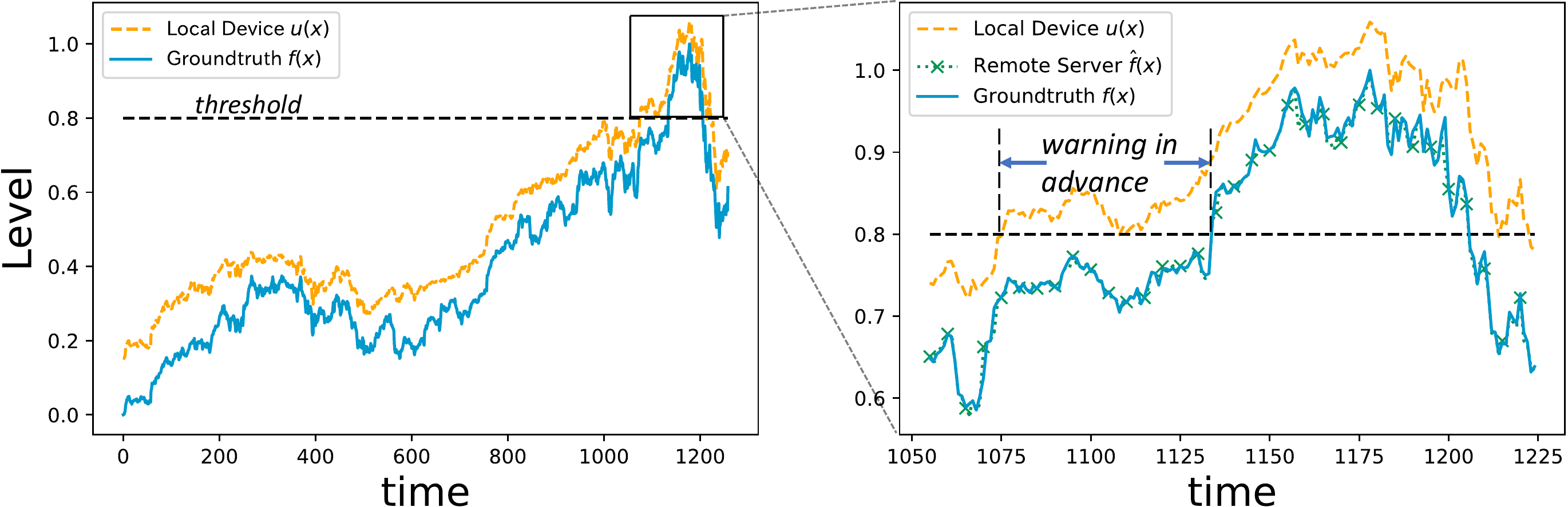}
	\vspace{-0.1in}
	\caption{Financial experiments. The on-device neural network $u$ provides continuous monitoring while the on-server negative corrector $-s \sigma(v)$ is activated only when $u$ is above the threshold. On-device model size is compressed 6x times and the overall communication cost is reduced 10x times.}
	\label{fig:finance}
\end{figure*}

The Dow Jones Industrial Average (DJIA) dataset~\cite{finance} represents the stock market index of 30 companies' stock value and how they have traded in the stock market during various periods of time. We consider Apple Inc.'s stock price as the potential ground truth $f(x)$ and use the price from the other 29 companies to predict it. All data are normalized to range $[0,1]$, with the level $0.8$ as the warning threshold. The baseline architecture ($\mathcal{V}$) is selected as a FC(29,64,128,256,1) neural network, which attains a mean-squared loss of approximately $10^{-6}$. We truncate the neural numbers on the second-last layer to 16 to obtain the on-device model $u(x)$.

\vspace{0.1in}
Fig~\ref{fig:finance} reports the results in the following aspects.
\begin{enumerate}
	\item The on-device predictor $u(x)$ can always provide an upper approximation of the true signal $f$, leading to an early warning before the ground truth exceeds the threshold. This naturally guarantees the safety requirement, since the false negative rate is always 0 in this case.
	\item  In case the on-device provides signal exceeds the threshold, the negative corrector $-s \sigma(v(x))$ is activated and the server provides more accurate predictions $\hat{f}(x) = u(x) -s \sigma(v(x))$. In Fig~\ref{fig:finance}, this combined approximation is almost identical to the groundtruth, hence the whole system obtains more accurate approximation and the extra false positive prediction is eliminated by remote corrector.
	\item By local monitoring and local analysis $u(x)$, the communication cost is reduced 10x times.
\end{enumerate}

In addition to strictly truncate model based on Prop~\ref{thm:residualbound_inf}, Prop~\ref{thm:approxpower} also indicates a simple neural network can also act as on-device upper approximator. In appendix, we use a FC(29,10,1) network as a local monitoring tool so that the on-device model size is compressed more significantly. The drawback is we have to manually select a larger $s$ and incur slightly more false positive cases (+2\%) for $u$.

\begin{figure*}[!h]
	\centering
	\includegraphics[width=1\linewidth]{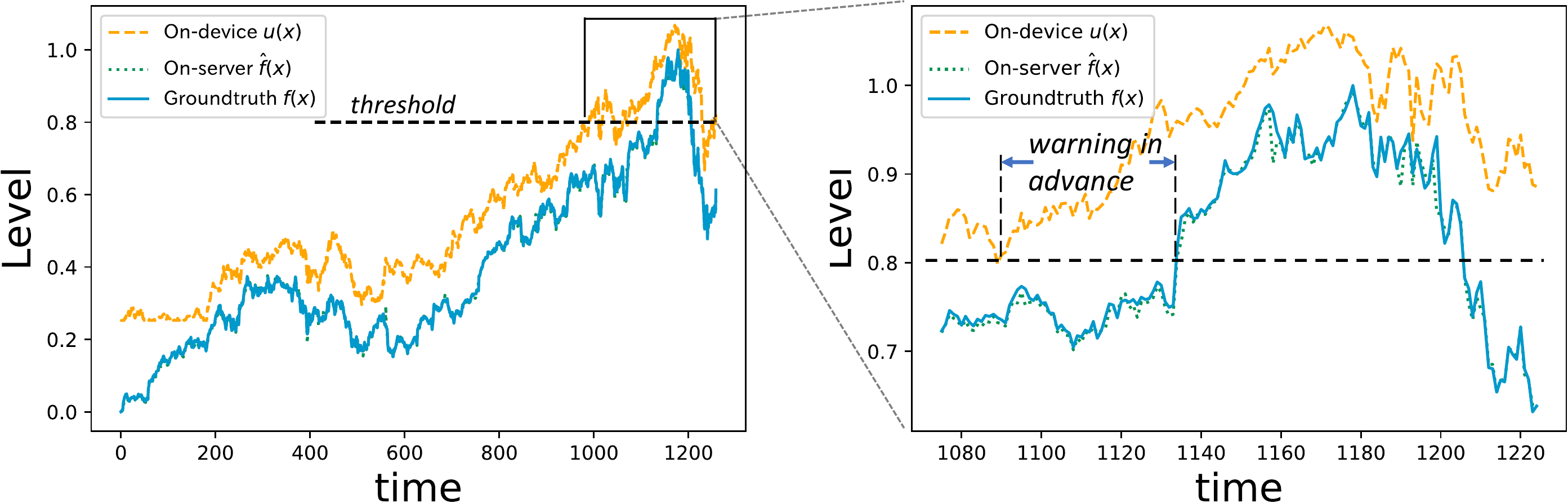}
	\caption{Finance dataset with a simpler monitoring net.}
	\label{fig:financesimple}
\end{figure*}

To summarize, the model decomposition scheme allows us to use collaborative inference instead of purely on-device or on-server predictions. The approximation error is shown to be no worse than the previous complex model in both theory and practice. Specifically, the safety regime proposed in this paper makes our work different from previous studies. 
For financial dataset, this upper approximation allows the users to buy in the stock price in advance. For other scenarios like health monitoring, this would allow us to predict the disease in advance, which could be more vital than purely proving an accurate approximation.

\vspace{0.1in}
\section{Conclusion}

In this paper, we introduce a model decomposition scheme to realize efficient and safe inference for remote monitoring tasks on edge devices. The key idea is the combination of a simple local monitoring surrogate and a complex negative corrector on the server side.  We demonstrate using experiments that following our theoretical analysis, one can greatly decrease the model complexity required for safe monitoring, thereby increasing the applicability of deep learning models in resource-constrained and safety-sensitive applications such as remote monitoring.


\clearpage
\bibliography{CPI.bib}

\begin{thebibliography}{10}

\bibitem{acharya2018deep}
U~Rajendra Acharya, Shu~Lih Oh, Yuki Hagiwara, Jen~Hong Tan, and Hojjat Adeli.
\newblock Deep convolutional neural network for the automated detection and
  diagnosis of seizure using eeg signals.
\newblock {\em Computers in biology and medicine}, 100:270--278, 2018.

\bibitem{ba2014deep}
Jimmy Ba and Rich Caruana.
\newblock Do deep nets really need to be deep?
\newblock In {\em Advances in neural information processing systems}, pages
  2654--2662, 2014.

\bibitem{Bucilua:2006:MC:1150402.1150464}
Cristian Bucilu\v{a}, Rich Caruana, and Alexandru Niculescu-Mizil.
\newblock Model compression.
\newblock In {\em Proceedings of the 12th ACM SIGKDD International Conference
  on Knowledge Discovery and Data Mining}, KDD '06, pages 535--541, New York,
  NY, USA, 2006. ACM.

\bibitem{chen2016xgboost}
Tianqi Chen and Carlos Guestrin.
\newblock Xgboost: A scalable tree boosting system.
\newblock In {\em Proceedings of the 22nd acm sigkdd international conference
  on knowledge discovery and data mining}, pages 785--794. ACM, 2016.

\bibitem{choi2018deep}
Hyomin Choi and Ivan~V Baji{\'c}.
\newblock Deep feature compression for collaborative object detection.
\newblock In {\em 2018 25th IEEE International Conference on Image Processing
  (ICIP)}, pages 3743--3747. IEEE, 2018.

\bibitem{cohen2016group}
Taco Cohen and Max Welling.
\newblock Group equivariant convolutional networks.
\newblock In {\em International conference on machine learning}, pages
  2990--2999, 2016.

\bibitem{fawaz2018deep}
Hassan~Ismail Fawaz, Germain Forestier, Jonathan Weber, Lhassane Idoumghar, and
  Pierre-Alain Muller.
\newblock Deep learning for time series classification: a review.
\newblock {\em Data Mining and Knowledge Discovery}, 33(4):917--963, 2018.

\bibitem{friedman2001greedy}
Jerome~H Friedman.
\newblock Greedy function approximation: a gradient boosting machine.
\newblock {\em Annals of statistics}, pages 1189--1232, 2001.

\bibitem{gama2000cascade}
Jo{\~a}o Gama and Pavel Brazdil.
\newblock Cascade generalization.
\newblock {\em Machine learning}, 41(3):315--343, 2000.

\bibitem{gong2014compressing}
Yunchao Gong, Liu Liu, Ming Yang, and Lubomir Bourdev.
\newblock Compressing deep convolutional networks using vector quantization.
\newblock {\em arXiv preprint arXiv:1412.6115}, 2014.

\bibitem{han2015learning}
Song Han, Jeff Pool, John Tran, and William Dally.
\newblock Learning both weights and connections for efficient neural network.
\newblock In {\em Advances in neural information processing systems}, pages
  1135--1143, 2015.

\bibitem{hanson1989comparing}
Stephen~Jos{\'e} Hanson and Lorien~Y Pratt.
\newblock Comparing biases for minimal network construction with
  back-propagation.
\newblock In {\em Advances in neural information processing systems}, pages
  177--185, 1989.

\bibitem{javaid2016deep}
Ahmad Javaid, Quamar Niyaz, Weiqing Sun, and Mansoor Alam.
\newblock A deep learning approach for network intrusion detection system.
\newblock In {\em Proceedings of the 9th EAI International Conference on
  Bio-inspired Information and Communications Technologies (formerly
  BIONETICS)}, pages 21--26. ICST (Institute for Computer Sciences,
  Social-Informatics and~…, 2016.

\bibitem{kang2017neurosurgeon}
Yiping Kang, Johann Hauswald, Cao Gao, Austin Rovinski, Trevor Mudge, Jason
  Mars, and Lingjia Tang.
\newblock Neurosurgeon: Collaborative intelligence between the cloud and mobile
  edge.
\newblock In {\em ACM SIGARCH Computer Architecture News}, volume~45, pages
  615--629. ACM, 2017.

\bibitem{DBLP:journals/corr/abs-1806-07568}
Jaehong Kim, Sungeun Hong, Yongseok Choi, and Jiwon Kim.
\newblock Doubly nested network for resource-efficient inference.
\newblock {\em CoRR}, abs/1806.07568, 2018.

\bibitem{kingma2014adam}
Diederik~P Kingma and Jimmy Ba.
\newblock Adam: A method for stochastic optimization.
\newblock {\em arXiv preprint arXiv:1412.6980}, 2014.

\bibitem{lecun1998gradient}
Yann LeCun, L{\'e}on Bottou, Yoshua Bengio, and Patrick Haffner.
\newblock Gradient-based learning applied to document recognition.
\newblock {\em Proceedings of the IEEE}, 86(11):2278--2324, 1998.

\bibitem{li2018edge}
Chao Li, Yushu Xue, Jing Wang, Weigong Zhang, and Tao Li.
\newblock Edge-oriented computing paradigms: A survey on architecture design
  and system management.
\newblock {\em ACM Computing Surveys (CSUR)}, 51(2):39, 2018.

\bibitem{marquez2018deep}
Enrique~S Marquez, Jonathon~S Hare, and Mahesan Niranjan.
\newblock Deep cascade learning.
\newblock {\em IEEE transactions on neural networks and learning systems},
  29(11):5475--5485, 2018.

\bibitem{rajpurkar2017cardiologist}
Pranav Rajpurkar, Awni~Y Hannun, Masoumeh Haghpanahi, Codie Bourn, and Andrew~Y
  Ng.
\newblock Cardiologist-level arrhythmia detection with convolutional neural
  networks.
\newblock {\em arXiv preprint arXiv:1707.01836}, 2017.

\bibitem{romero2014fitnets}
Adriana Romero, Nicolas Ballas, Samira~Ebrahimi Kahou, Antoine Chassang, Carlo
  Gatta, and Yoshua Bengio.
\newblock Fitnets: Hints for thin deep nets.
\newblock {\em arXiv preprint arXiv:1412.6550}, 2014.

\bibitem{schirrmeister2017deep}
Robin~Tibor Schirrmeister, Jost~Tobias Springenberg, Lukas Dominique~Josef
  Fiederer, Martin Glasstetter, Katharina Eggensperger, Michael Tangermann,
  Frank Hutter, Wolfram Burgard, and Tonio Ball.
\newblock Deep learning with convolutional neural networks for eeg decoding and
  visualization.
\newblock {\em Human brain mapping}, 38(11):5391--5420, 2017.

\bibitem{schwenk2000boosting}
Holger Schwenk and Yoshua Bengio.
\newblock Boosting neural networks.
\newblock {\em Neural computation}, 12(8):1869--1887, 2000.

\bibitem{DBLP:journals/corr/abs-1811-01476}
Mohammad~Saeed Shafiee, Mohammad~Javad Shafiee, and Alexander Wong.
\newblock Efficient inference on deep neural networks by dynamic
  representations and decision gates.
\newblock {\em CoRR}, abs/1811.01476, 2018.

\bibitem{xu2015learning}
Dan Xu, Elisa Ricci, Yan Yan, Jingkuan Song, and Nicu Sebe.
\newblock Learning deep representations of appearance and motion for anomalous
  event detection.
\newblock {\em arXiv preprint arXiv:1510.01553}, 2015.

\bibitem{finance}
yahoo.
\newblock Dow jones industrial average dataset.

\bibitem{yildirim2018arrhythmia}
{\"O}zal Y{\i}ld{\i}r{\i}m, Pawe{\l} P{\l}awiak, Ru-San Tan, and U~Rajendra
  Acharya.
\newblock Arrhythmia detection using deep convolutional neural network with
  long duration ecg signals.
\newblock {\em Computers in biology and medicine}, 102:411--420, 2018.

\bibitem{zhai2016doubly}
Shuangfei Zhai, Yu~Cheng, Zhongfei~Mark Zhang, and Weining Lu.
\newblock Doubly convolutional neural networks.
\newblock In {\em Advances in neural information processing systems}, pages
  1082--1090, 2016.

\bibitem{zhao2004constrained}
Huimin Zhao and Sudha Ram.
\newblock Constrained cascade generalization of decision trees.
\newblock {\em IEEE Transactions on Knowledge and Data Engineering},
  16(6):727--739, 2004.

\bibitem{zhao2016deep}
Rui Zhao, Ruqiang Yan, Zhenghua Chen, Kezhi Mao, Peng Wang, and Robert~X Gao.
\newblock Deep learning and its applications to machine health monitoring: A
  survey.
\newblock {\em arXiv preprint arXiv:1612.07640}, 2016.

\end{thebibliography}
\bibliographystyle{plain}

\end{document}